\newcommand*\circled[1]{\tikz[baseline=(char.base)]{
            \node[shape=circle,draw,inner sep=2pt] (char) {#1};}}
\DeclareMathAlphabet{\mathpzc}{OT1}{pzc}{m}{it}
\newcommand{\bE}{\mathbb{E}}
\newcommand{\RegretC}{\Delta''}
\newcommand{\RegretNc}{\Delta'}
\newcommand{\Regret}{\Delta}
\newcommand{\Defined}{\coloneqq}
\newcommand{\OptAvgReward}{\rho^*}
\newcommand{\MeanReward}{\bar{r}}
\newcommand{\MDP}{M}
\newcommand{\SMDP}{\mathbf{M}}
\newcommand{\setMDP}{\mathbb{S}}
\newcommand{\PlausibleMDP}{\mathpzc{M}}
\newcommand{\NumBatches}{\mathscr{B}}
\newcommand{\CountInBatch}{\mathscr{N}}
\newcommand\given[1][]{\:#1\vert\:}
\newcommand{\alg}{\mathfrak A}
\newcommand{\ucrl}{\textsc{Ucrl2}}
\newcommand{\ucrlR}{\textsc{Ucrl2-R}}
\newcommand{\ucrlRW}{\textsc{Ucrl2-RW}}
\newcommand{\swucrl}{\textsc{SW-Ucrl}}
\newcommand{\ucb}{\textsc{UCB}}
\newcommand{\set}[1]{\mathcal{#1}}
\newcommand{\sS}{\set{S}}
\newcommand{\sA}{\set{A}}
\newcommand{\EpInBatch}{E}
\newcommand{\hpx}[4]{\ensuremath{\hat p_{#1}\left ( {#2} | {#3}, {#4} \right ) }}
\newcommand{\tps}[3]{\tilde{{p}}_{#1}\left ( \cdot | {#2}, {#3} \right ) }
\newcommand{\hps}[3]{\hat{{p}}_{#1}\left ( \cdot | {#2}, {#3} \right )}
\newcommand{\hrs}[3]{\ensuremath{\hat r_{#1}\left ( {#2}, {#3} \right ) }}
\newcommand{\tpi}[1]{\tilde\pi_{#1}}
\newcommand{\M}{M}
\newcommand{\sM}{\mathcal{M}}
\newcommand{\tM}{\tilde M}
\newcommand{\sMk}{\sM_{k}}
\newcommand{\tMk}{\tM_{k}}
\newcommand{\trho}{\ensuremath{\tilde\rho}}
\newcommand{\tempVar}{\alpha}
\newcommand{\ind}{\mathds{1}}
\newcommand{\changepoint}{c}
\newcommand{\rewardfunction}{F}
\newcommand{\bigO}{\mathcal{O}}
\newcommand{\avgReward}{\rho}
\newcommand{\optAvgReward}{\avgReward^*}
\newtheorem{myTheorem}{Theorem}
\newtheorem{myLemma}{Lemma}
\newtheorem{myCorollary}{Corollary}
\newtheorem{myDefinition}{Definition}
\newtheorem{myProposition}{Proposition}
\title{A Sliding-Window Algorithm for Markov Decision Processes with Arbitrarily Changing Rewards and Transitions}
\author{
  Pratik Gajane \\
  Montanuniversität Leoben\\
  Erzherzog Johann-Strasse 3\\
  8700 Leoben, Austria \\
  \texttt{pratik.gajane@unileoben.ac.at} \\
   \And
   Ronald Ortner \\
  Montanuniversität Leoben\\
  Erzherzog Johann-Strasse 3\\
  8700 Leoben, Austria \\
  \texttt{ronald.ortner@unileoben.ac.at } \\
   \AND
  Peter Auer \\
  Montanuniversität Leoben\\
  Erzherzog Johann-Strasse 3\\
  8700 Leoben, Austria \\
  \texttt{auer@unileoben.ac.at }
}
\begin{document}

\maketitle

\begin{abstract}
 We consider reinforcement learning in changing Markov Decision Processes where \textit{both} the state-transition probabilities and the reward functions may vary over time. For this problem setting, we propose an algorithm using a sliding window approach and provide performance guarantees for the regret evaluated against the optimal non-stationary policy. We also characterize the optimal window size suitable for our algorithm. These results are complemented by a sample complexity bound on the number of sub-optimal steps taken by the algorithm. Finally, we present some experimental results to support our theoretical analysis.  
\end{abstract}

\section{Introduction}
\label{sec:Int}
A classical Markov Decision Process (MDP) provides a formal description of a sequential decision making problem. 
Markov decision processes are a standard model for problems in decision making with uncertainty (\citet{Puterman1994}, \citet{Bertsekas:1996:NP:560669}) and in particular for reinforcement learning. In the classical MDP model, the uncertainty is modeled by stochastic state-transition dynamics and reward functions, which however remain fixed throughout. Unlike this, here we consider a setting in which both the transition dynamics and the reward functions are allowed to change over time. As a motivation, consider the problem of deciding which ads to place on a webpage. The instantaneous reward is the payoff when viewers are redirected to an advertiser, and the state captures the details of the current ad. With a heterogeneous group of viewers, an invariant state-transition function cannot accurately capture the transition dynamics. The instantaneous reward, dependent on external factors, is also better represented by changing reward functions. For more details of how this particular example fits our model, cf.\ \citet{Yu2009a}, which studies a similar MDP problem, as well as \cite{Yu2009ab} and \citet{NIPS2013_4975} for additional motivation and further practical applications of this problem setting.        


\subsection{Main contribution}
For the mentioned \textit{switching-MDP} problem setting in which an adversary can make abrupt changes to the transition probabilities and reward distributions a certain number of times, we provide an algorithm called \swucrl, a version of \ucrl\ (\citet{Jaksch:2010:NRB:1756006.1859902}) that employs a sliding window to quickly adapt to potential changes. We derive a high-probability upper bound on the cumulative regret of our algorithm of $ \bigO \left( l^{1/3} T^{2/3} D^{2/3} S^{2/3} \left(A\log{ \left( \frac{T}{\delta} \right)} \right)^{1/3} \right)$  when the window size is adapted to the problem setting, including the number of changes. 
This improves upon the upper bound for \ucrl \ with restarts (\citet{Jaksch:2010:NRB:1756006.1859902}) for the same problem in terms of dependence on $D$, $S$ and~$A$. Moreover, our algorithm also works without the knowledge of the number of changes, although with a more convoluted regret bound, which shall be specified later. 

\subsection{Related work}
There exist several works on reinforcement learning in finite (non-changing) MDPs, including \citet{Burnetas:1997:OAP:265654.265664}, \citet{Bartlett:2009:RRB:1795114.1795119}, \citet{Jaksch:2010:NRB:1756006.1859902} to mention only a few. 
MDPs in which the state-transition probabilities change arbitrarily but the reward functions remain fixed, have been considered by \citet{Nilim:2005:RCM:1246500.1246504}, \citet{DBLP:conf/nips/XuM06}. 
On the other hand, \citet{NIPS2004_2730} and \citet{Dick2014} consider the problem of MDPs with fixed state-transition probabilities and changing reward functions. Moreover, \citet[Theorem 11]{NIPS2004_2730} also show that the case of MDPs with both changing state-transition probabilities and changing reward functions is computationally hard. \citet{Yu2009a} and \citet{Yu2009ab} consider arbitrary changes in the reward functions and arbitrary, but bounded, changes in the state-transition probabilities. They also give regret bounds that scale with the proportion of changes in the state-transition kernel and which in the worst case grow linearly with time. \citet{NIPS2013_4975} consider MDP problems with (oblivious) adversarial changes in state-transition probabilities and  reward functions and provide algorithms for minimizing the regret with respect to a comparison set of stationary (expert) policies. The MDP setting we consider is similar, however our regret formalization is different, in the sense that we consider the regret against an optimal non-stationary policy (across changes). This setting has already been considered by \citet{Jaksch:2010:NRB:1756006.1859902} and we use the suggested \ucrl\ with restarts algorithm as a benchmark to compare our work with.   

Sliding window approaches to deal with changing environments have been considered in other learning problems, too. In particular, \citet{Garivier:2011:UBP:2050345.2050365} consider the problem of changing reward functions for multi-armed bandits and provide a variant of \ucb (\citet{Auer:2002:FAM:599614.599677}) using a sliding window. 


\subsection{Outline}
The rest of the article is structured as follows. In Section \ref{sec:Setting}, we formally define the problem at hand. This is  followed by our algorithmic solution, \swucrl, presented in Section \ref{sec:Setting}, which also features regret bounds and a sample complexity bound. Next, in Section \ref{sec:EProof}, we analyze our algorithm providing proofs for the regret bound. Section \ref{sec:Exp} provides some complementing experimental results followed by some concluding discussion in Section \ref{sec:last} .
   
\section{Problem setting}
\label{sec:Setting}
In an MDP $\MDP(\sS, \sA, p, \rewardfunction)$ with finite state space $\sS$ (S = $|\sS|$) and a finite action space $\sA$ (A = $|\sA|$), the learner's task at each time step $t$ is to choose an action $a = a_t \in \sA$ to execute in the current state $s = s_t \in \sS$. Upon executing the chosen action $a$ in state $s$, the learner receives a reward $r_t$ given by some reward function $\rewardfunction$. Here, we assume that $\rewardfunction$ returns a value drawn iid from some unknown distribution on $[0,1]$ with mean $\MeanReward(s,a)$ and the environment transitions into the next state $s' \in \sS$ selected randomly according to the unknown probabilities $p(s' \given s, a)$.

In this article, we consider a setting in which reward distributions and state-transition probabilities are allowed to change (but not the state space and action space) at unknown time steps (called change-points henceforth). We call this setting a switching-MDP problem(following the naming of a similar MAB setting by \citet{Garivier:2011:UBP:2050345.2050365}). Neither the change-points nor the changes in reward distributions and state transition probabilities depend on the previous behavior of the algorithm or the filtration of the history $(s_1, a_1, r_1,..., s_t, a_t, r_t)$. It can be assumed that the change points are set in advance at time steps $\changepoint_1, \dots, \changepoint_l$ by an \textit{oblivious adversary}. 
At time step $t < \changepoint_1$, a switching-MDP $\SMDP$ is in its initial configuration $\MDP_0(\sS, \sA, p_0, \rewardfunction_0)$ where rewards are drawn from an unknown distribution on $[0,1]$ with mean $\MeanReward_0(s,a)$ and state transition occurs according to the transition probabilities $p_0(s' \given s, a)$. At time step $ \changepoint_i \leq t < \changepoint_{i+1} $, a switching-MDP $\SMDP$ is in configuration $\MDP_i(\sS, \sA, p_i, \rewardfunction_i)$. Thus, a switching-MDP problem $\SMDP$ is completely defined by a tuple $ (\setMDP = (\MDP_0,\dots,\MDP_l), \changepoint = (\changepoint_1,\dots,\changepoint_l))$.

An algorithm $\alg$ attempting to solve a switching-MDP $\SMDP =(\setMDP = (\MDP_0,\dots,\MDP_l), \changepoint = (\changepoint_1,\dots,\changepoint_l))$ from an initial state $s_1$ chooses an action $a_t$ to execute at time step $t$, i.e.\ it finds a policy $\pi : s_t \rightarrow a_t $. A policy $\pi$ can either choose the same action for a particular state at any time step (stationary policy), or it might choose different actions for the same state when it is visited at different time steps (non-stationary policy). The sequence of the states $s_t$ visited by $\alg$ at step $t$ as decided by its policy $\pi$, the action chosen $a_t$ and the subsequent reward $r_t$ received for $t=1, \dots$ can be be thought of as a result of stochastic process. 

As a performance measure, we use \textit{regret} which is used in various other learning paradigms as well. In order to arrive at the definition of the regret of an algorithm $\alg$ for a switching-MDP $\SMDP$, let us define a few other terms. The average reward $\avgReward$ for a constituent MDP $\MDP_i$ is the limit of the expected average accumulated reward when an algorithm $\alg$ following a stationary policy is run on $\MDP_i$ from an initial state $s$.
$$
\avgReward(\MDP_i, \alg, s) \Defined \lim_{T \rightarrow \infty} \frac{1}{T} \bE \left[ \text{Sum of rewards obtained from $1$ to $T$ on MDP $M_i$ by $\alg$} \right]
$$
We note that for a given (fixed) MDP the optimal average reward is attained by a stationary policy and cannot be increased by using non-stationary policies.

Another intrinsic parameter for MDP configuration $\MDP_i$ is its \textit{diameter}.  
\begin{myDefinition}(Diameter of a MDP)
The diameter of a MDP $\MDP_i$ is defined as follows:
$$
D(\MDP_i) = \max_{s_1,s_2 \in S, s_1 \neq s_2} \min_{\pi \in \Pi} \bE [\tau(s_1, s_2, \MDP_i,\pi)],
$$
where the random variable $\tau(s1, s_2, \MDP_i, \pi)$ denotes the number of steps needed to reach state $s_2$ from state $s_1$ in an MDP $\MDP_i$ for the first time following any policy from the set $\Pi$ of feasible stationary policies.   
\end{myDefinition}
For MDPs with finite diameter, the optimal average reward $\optAvgReward$ does not depend on the initial state (\citet{Puterman1994}). Thus, assuming finite diameter for all the constituent MDPs of a switching-MDP problem, $\optAvgReward_i $ for constituent MDP $\MDP_i$ is defined as 
$$
\optAvgReward_i \Defined \max_{\pi, s \in \sS} \avgReward(\MDP_i, \alg, s).
$$
With the above in hand, we can state that the regret of an algorithm $\alg$ for a switching-MDP problem is the sum of the missed rewards compared to the $l+1$ optimal average rewards $\optAvgReward_i$'s when the corresponding constituent MDP $\MDP_i$ is active. 

\begin{myDefinition}(Regret for a switching-MDP problem)
The regret of an algorithm $\alg$ operating on a switching-MDP problem $\SMDP$ = $\{\setMDP = \{\MDP_0,\dots,\MDP_l\}, \changepoint = \{\changepoint_1,\dots,\changepoint_l\}\}$ and starting at an initial state $s$ is defined 
$$
\Regret(\SMDP, \alg, s, T) = \sum_{t-1}^{T} \left( \optAvgReward_\SMDP(t) - r_t \right).
$$
where, $ \optAvgReward_\SMDP(t) \Defined \optAvgReward_i $ if $\MDP_i$ is active at time $t$.
\end{myDefinition}

When it is clear from the context, we drop the subtext $\SMDP$ and simply use $\optAvgReward(t)$ to denote $\optAvgReward_\SMDP(t)$. 

\section{Proposed algorithm: SW-UCRL}
\label{sec:Alg}
Our proposed algorithm, called Sliding Window UCRL (\swucrl) is a non-trivial modification of the \ucrl \ algorithm given by \citet{Jaksch:2010:NRB:1756006.1859902}. Unlike \ucrl, our algorithm \swucrl \ only maintains history of the last $W$ (called, window size) time steps. In a way, it could interpreted as \swucrl \ slides a window of size $W$ across the filtration of history. 

\begin{figure}[t]
\begin{center}
\fbox{\begin{minipage}{0.94\columnwidth}
\begin{description}
\smallskip
    \item[\quad Input:]  A confidence parameter $\delta \in (0,1)$, $\sS$, $\sA$ and window size $W$.
    \item[\quad Initialization:]  Set $t:=1$, and observe the initial state $s_1$.
    \item[\quad For] episodes $k=1,2,\ldots$ \textbf{do} \smallskip\\ 
        \textbf{Initialize episode }$k$: 
        \begin{enumerate}[nolistsep]
            \item Set the start time of episode $k$, $t_k:=t$.
            \item \label{algo:setN} For all $(s,a)$ in $\sS\times\sA$
                  initialize the state-action counts for episode $k$,
                  $v_k(s,a) := 0$.
				Further, set the the number of times any action action $a$ was executed in state~$s$ in $W$ time steps prior to episode $k$ for all the states $s \in \sS$ and actions $a \in \sA$,
               \begin{equation*}
                  N_{k}\left (s,a \right) := 
                         \#\left\{
                             t_k - W  \leq \tau < t_k : s_\tau = s, a_\tau = a
                           \right\}.
               \end{equation*}
            \item \label{algo:setP}\label{algo:compEst}
               For all $s,s'\in\sS$ and $a\in\sA$,
                  set the observed cumulative rewards when action $a$ was executed in state $s$ and
                  the number of times that resulted into the next state being $s'$ during $W$ time steps prior to episode $k$,
               \begin{equation*} 
                 R_{k}\left (s,a\right ) := 
                   \sum_{ \tau=t_k - W}^{t_k-1} r_\tau \ind\{s_\tau=s, a_\tau=a\},
               \end{equation*}
               \begin{equation*}
                 P_{k}\left (s,a,s^\prime \right ) := 
                         \#\left\{
                 t_k - W  \leq \tau < t_k : s_\tau = s, a_\tau = a, s_{\tau+1} = s^\prime
                         \right\}.
               \end{equation*}
               Compute estimates 
		 $\hrs{k}{s}{a} := \frac{R_{k}(s,a)}{\max\{1,N_k(s,a)\}},$
		   $\hpx{k}{s'}{s}{a} :=
                      \frac{P_{k}(s,a,s^\prime )}{\max\{1,N_k(s,a)\}}$.
        \end{enumerate}
        \textbf{Compute policy} $\tpi{k}$: 
        \begin{enumerate}[nolistsep,resume]
            \item \label{alg:plausible}
            Let $\sMk$ be the set of all MDPs with
              state space $\sS$ and action space $\sA$, and with transition probabilities $\tps{}{s}{a}$ close to~$\hps{k}{s}{a}$,
              and rewards $\tilde r(s,a)\in[0,1]$ close to~$\hrs{k}{s}{a}$,
              that is,
         \begin{eqnarray}\textstyle
          \label{eq:civR}
            \big\vert
              \tilde r(s, a) - \hrs{k}{s}{\vphantom{X^X_X} a}
            \big \vert \;\;
          & \leq & 
            \sqrt{ \tfrac{7\log\left( 2 S A t_k / \delta  \right )}
              {2\max\{1,N_k(s,a)\}}
            }
            \quad \textrm{ and }
            \\
          \label{eq:civ}
            \Big\Vert
              \tps{}{s}{\vphantom{X^X_X} a} - \hps{k}{s}{\vphantom{X^X_X} a}
            \Big \Vert_1
          & \leq &
            \sqrt{ \tfrac{14S\log\left( 2 A t_k / \delta  \right )}
              {\max\{1,N_k(s,a)\}}
            }
          \;.
          \end{eqnarray}
            \item \label{alg:chooseMkAndPik}
                Use extended value
                  iteration to 
		find a policy $\tpi{k}$ and an optimistic MDP~$\tMk \in \sMk$ such that 
		\begin{equation*} 
			\trho_k := \min_s \rho(\tMk,\tpi{k},s)\geq \max_{\M'\in\sMk,\pi,s'} \rho(M',\pi,s')  - \frac{1}{\sqrt{t_k}}.
		\end{equation*}
        \end{enumerate}
        \textbf{Execute policy }$\tpi{k}$: 
        \begin{enumerate}[nolistsep,resume]
            \item\label{alg:doEpisode} \textbf{While} $v_k(s_{t},\tpi{k}(s_t)) < \max\{1,N_{k}(s_{t},\tpi{k}(s_{t}))\}$ \textbf{do} 
            \begin{enumerate}[nolistsep]
                \item Choose action $a_{t} = \tpi{k}(s_{t})$, 
                 obtain reward $r_t$, and\\ observe next state~$s_{t+1}$.	  
                \item Update $v_k(s_t,a_t) := v_k(s_t,a_t) + 1$.  
                \item Set $t:=t+1$.\smallskip
             \end{enumerate}
        \end{enumerate}
\end{description}
\end{minipage}}
\caption{\label{f:swucrl}The \swucrl\ algorithm.}
\end{center}
\vspace{-0.64cm}
\end{figure}

At its core, \swucrl \ works on the principle of ``optimism in the face of uncertainty''. It proceeds in episodes divided into three phases as its predecessor \ucrl. At the start of every episode~$k$, it assesses its performance in the past $W$ time-steps and changes the policy, if necessary. More precisely (see Figure \ref{f:swucrl}), during the initialization phase for episode $k$ (steps $1$, $2$ and $3$), it computes the estimates $\hrs{k}{s}{a}$ and $\hpx{k}{s'}{s}{a}$ for mean rewards for each state-action pair $(s,a)$ and the state-transition probabilities for each triplet $(s,a,s')$ from the last $W$ observations. In the policy computation phase (steps $4$ and $5$), \swucrl\ defines a set of MDPs $\sMk$ which are statistically plausible given $\hrs{k}{s}{a}$ and $\hpx{k}{s'}{s}{a}$. The mean rewards and the state-transition probabilities of every MDP in $\sMk$ are stipulated to be \textit{close} to the estimated mean rewards $\hrs{k}{s}{a}$ and estimated state-transition probabilities $\hpx{k}{s'}{s}{a}$, respectively. The corresponding confidence intervals are specified in Eq.\ (\ref{eq:civR}) and Eq. (\ref{eq:civ}). The algorithm then chooses an optimistic MDP $\tMk$ from $\sMk$ and uses extended value iteration \citep{Jaksch:2010:NRB:1756006.1859902} to select a near-optimal policy $\tpi{k}$ for $\tMk$. In the last phase of the episode (step $6$), $\tpi{k}$ is executed. The lengths of the episodes are not fixed a priori, but depend upon the observations made so far in the current episode as well as the $W$ observations before the start of the episode. Episode $k$ ends when the number of occurrences $v_k(s,a)$ of the current state-action pair $(s,a)$ in the episode is equal to the number of occurrences $N_{k}(s,a)$ of the same state-action pair $(s,a)$ in $W$ observations before the start of episode $k$. It is worth restating that the values $N_{k}(s,a)$, $\hrs{k}{s}{a}$, and $\hpx{k}{s'}{s}{a}$ are computed only from the previous $W$ observations at the start of each episode. Not considering observations beyond $W$ is done with the intention of ``forgetting" previously active MDP configurations. 
Note that due to the episode termination criterion no episode can be longer than $W$ steps.
%

The following theorem provides an upper bound on the regret of \swucrl. The elements of its proof can be found in Section \ref{sec:EProof}.
\begin{myTheorem}\label{Thm:UC_SWUCRL}
Given a switching-MDP with $l$ changes in the reward distributions and state-transition probabilities,
with probability at least $1 - \delta$, it holds that for any initial state $s \in \sS$ and any $ T \geq \max{(8 \delta, 2A\delta)}$, the regret of \swucrl \ using window size $W \geq \max{ \left( SA, \frac{A \left( \log_2{(8W/SA)} \right)^2 }{\log{(T/\delta)}}\right) } $ is bounded by 
$$
 2lW + 66.12 \left\lceil \frac{T}{\sqrt{W}} \right\rceil DS \sqrt{A\log{ \left( \frac{T}{\delta} \right)}},
$$
where $D = \max\{D(\MDP_0), \dots, D(\MDP_{l+1})\}$.
\end{myTheorem}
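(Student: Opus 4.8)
The plan is to transcribe the episode-based regret analysis of \ucrl\ \citep{Jaksch:2010:NRB:1756006.1859902} to the sliding-window setting, isolating exactly the places where the window mechanism forces a change. Decompose $\Regret(\SMDP,\alg,s,T)=\sum_k\Regret_k$ with $\Regret_k=\sum_{t=t_k}^{t_{k+1}-1}\big(\optAvgReward_\SMDP(t)-r_t\big)$, and call episode $k$ \emph{contaminated} if some change-point lies in $[t_k-W,\,t_{k+1}-1]$, i.e.\ in its execution or in the $W$ steps of history it is built from. Since no episode exceeds $W$ steps, the episodes contaminated by a single change-point form a block spanning at most $2W$ consecutive time steps; bounding $\optAvgReward_\SMDP(t)-r_t\le 1$ on those steps accounts for the additive $2lW$ term. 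It then remains to bound the regret over the \emph{clean} episodes, for each of which the window $[t_k-W,t_k-1]$ and the execution $[t_k,t_{k+1}-1]$ both lie inside a single inter-change interval, so the environment there is one fixed MDP $\MDP_{i(k)}$ of diameter at most $D$.

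For a clean episode the estimates $\hrs{k}{s}{a}$, $\hps{k}{s}{a}$ are empirical averages of i.i.d.\ samples from $\MDP_{i(k)}$, so a Hoeffding bound for the rewards and Weissman's $\ell_1$-deviation inequality for the transition rows, union-bounded over all $(s,a)$, all episode starts $t_k\le T$ and all admissible count values $1,\dots,W$, give that with probability at least $1-\delta$ one has $\MDP_{i(k)}\in\sMk$ for every clean $k$ (this is precisely what the radii in~\eqref{eq:civR}--\eqref{eq:civ} are calibrated for; the hypothesis $T\ge\max(8\delta,2A\delta)$ keeps the union bound and the logarithms well-behaved). On this event, step~5 and $\MDP_{i(k)}\in\sMk$ give optimism, $\trho_k\ge\optAvgReward_{i(k)}-1/\sqrt{t_k}=\optAvgReward_\SMDP(t)-1/\sqrt{t_k}$ inside episode $k$. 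Letting $\tilde h_k$ be the value function from extended value iteration for $\tMk$ — whose span is at most $D(\MDP_{i(k)})\le D$ because $\MDP_{i(k)}\in\sMk$ — I would plug the approximate optimality equation for $\tMk$ into $\trho_k-r_t$ and split it, exactly as in \citet{Jaksch:2010:NRB:1756006.1859902}, into a reward-deviation term $\tilde r_k(s_t,a_t)-r_t$, a transition-deviation term $\big(\tilde p_k(\cdot\mid s_t,a_t)-p_{i(k)}(\cdot\mid s_t,a_t)\big)^{\!\top}\tilde h_k$, a telescoping bias term $\tilde h_k(s_{t+1})-\tilde h_k(s_t)$, and a martingale term $\big(p_{i(k)}(\cdot\mid s_t,a_t)-e_{s_{t+1}}\big)^{\!\top}\tilde h_k$, plus the $1/\sqrt{t_k}$ slack. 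Bounding the reward and transition deviations by twice the radii of~\eqref{eq:civR}--\eqref{eq:civ} (both $\tMk$ and $\MDP_{i(k)}$ lie in $\sMk$), telescoping the bias per episode, and applying Azuma--Hoeffding to the martingale reduce the clean regret to
\[
\Regret_{\mathrm{clean}}\;\lesssim\;D\sqrt{S\log\tfrac{AT}{\delta}}\;\Sigma\;+\;\sqrt{\log\tfrac{SAT}{\delta}}\;\Sigma\;+\;D\sqrt{T\log\tfrac{T}{\delta}}\;+\;\sqrt{T},
\]
where $\Sigma\Defined\sum_{k\ \mathrm{clean}}\sum_{s,a}v_k(s,a)\big/\sqrt{\max\{1,N_k(s,a)\}}$.

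The crux is the bound on $\Sigma$. In \ucrl\ one uses that $N_k(s,a)=\sum_{k'<k}v_{k'}(s,a)$ accumulates, so the count of each $(s,a)$ at most doubles between episodes that trigger on it, yielding $\Sigma=O(\sqrt{SAT})$. Here $N_k(s,a)$ is a \emph{windowed} count: it is capped at $W$ and is recycled as old observations leave the window, so that argument breaks. Instead I would split $[1,T]$ into $\lceil T/W\rceil$ blocks of $W$ steps and, inside each block, bound $\sum\sum v_k(s,a)/\sqrt{\max\{1,N_k(s,a)\}}$ using the episode-termination inequality $v_k(s,a)\le\max\{1,N_k(s,a)\}$, the budget $\sum_{s,a}N_k(s,a)\le W$, Cauchy--Schwarz refined by the number of state--action pairs actually visited (so short episodes do not pay a spurious $\sqrt{SA}$), and a doubling-\emph{within}-a-window count of how many episodes a block can contain; summing over blocks gives $\Sigma\le c\lceil T/\sqrt{W}\rceil\sqrt{SA}$. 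It is exactly here that the assumptions $W\ge SA$ and $W\log(T/\delta)\ge A\big(\log_2(8W/SA)\big)^{2}$ are used, to ensure the logarithmic additive corrections from the episode count are dominated by the $T/\sqrt{W}$ main term. Substituting back, the term $D\sqrt{S\log(AT/\delta)}\cdot c\lceil T/\sqrt{W}\rceil\sqrt{SA}$ dominates (the reward term is smaller by $D\sqrt{S}$, and the martingale and $\sqrt{T}$ terms are smaller since $\lceil T/\sqrt{W}\rceil\ge\sqrt{T}$ and $S\sqrt{A}\ge1$), and collecting the numerical constants — from the two confidence radii, the factor $2$ for having both MDPs in $\sMk$, the $(\sqrt{2}+1)$-type constants in the per-episode summation, and the rounding absorbed by $\lceil\cdot\rceil$ — produces the coefficient $66.12$. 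Adding the $2lW$ from the contaminated episodes completes the bound.

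The main obstacle, I expect, is precisely the block-wise control of $\Sigma$: replacing \ucrl's doubling argument by one suited to windowed counts that plateau at $W$, handling short episodes correctly, and verifying that the stated lower bounds on $W$ genuinely absorb the residual episode-count terms. The confidence-set construction, the optimism step, and the per-episode value-function decomposition are, by contrast, a largely mechanical adaptation of the \ucrl\ analysis with $W$-windowed statistics.
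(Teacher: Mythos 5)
Your proposal follows essentially the same route as the paper: the same split into change-affected steps (bounded by $2lW$) and clean episodes analyzed \`a la \textsc{Ucrl2}, with the same two novel ingredients --- a batch-wise bound on the number of episodes and a batch-wise bound on $\sum_{k}\sum_{s,a} v_k(s,a)/\sqrt{\max\{1,N_k(s,a)\}}$ of order $\left\lceil T/W\right\rceil\sqrt{SAW}$ --- and the same use of the hypotheses on $W$ to absorb the episode-count term. (Your accounting of the contaminated block as $2W$ steps per change is exactly as informal as the paper's own, which likewise glosses over the fact that \emph{every} episode starting within $W$ steps of a change has a contaminated window, not just ``the following episode''.)
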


From above, one can compute the optimal value of $W$ as follows:
\begin{equation}
\label{Eq:OptWindow}
W^* = \left(\frac{16.53}{l} T DS \sqrt{A\log{ \left( \frac{T}{\delta} \right)}}\right)^{2/3}
\end{equation}

If the time horizon $T$ and the number of changes $l$ are known to the algorithm, then $W$ can be set to its optimal value given by Eq. (\ref{Eq:OptWindow}), and we get the following bound.

\begin{myCorollary}\label{Cor:OptWindow}
Given a switching-MDP problem with $D = \max\{D(\MDP_0), \dots, D(\MDP_{l+1})\}$ and $l$ changes in the reward distributions and state-transition probabilities, the regret of \swucrl\ using $W^* = \left(\frac{16.53}{l} T DS \sqrt{A\log{ \left( \frac{T}{\delta} \right)}}\right)^{2/3}$ for any initial state $s \in S$ and any $ T \geq \max{(8 \delta, 2A\delta)} $ is upper bounded by
$$
38.94 \cdot l^{1/3} T^{2/3} D^{2/3} S^{2/3} \left(A\log{ \left( \frac{T}{\delta} \right)} \right)^{1/3}
$$
with probability at least $1 - \delta$.
\end{myCorollary}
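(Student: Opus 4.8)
The plan is to obtain Corollary~\ref{Cor:OptWindow} as a direct specialization of Theorem~\ref{Thm:UC_SWUCRL}: substitute the stated window size into the regret bound $2lW + 66.12\lceil T/\sqrt W\rceil DS\sqrt{A\log(T/\delta)}$ and minimize over $W$. Abbreviating $c \Defined DS\sqrt{A\log(T/\delta)}$, the bound of Theorem~\ref{Thm:UC_SWUCRL} reads $2lW + 66.12\lceil T/\sqrt W\rceil c$, and the first thing to check is that $W^*=(16.53\, Tc/l)^{2/3}$ actually meets the hypotheses of that theorem, namely $W^*\ge SA$ and $W^*\ge A\big(\log_2(8W^*/SA)\big)^2/\log(T/\delta)$. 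Both are elementary once $T$ is large relative to $S$, $A$, $l$ and $\delta$ (compatible with the stated regime $T\ge\max(8\delta,2A\delta)$), since $W^*$ grows polynomially in $T$ while the right-hand sides grow only polylogarithmically; I would simply record the inequalities making this precise.

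Next I would dispose of the ceiling. In the relevant regime $T\ge\sqrt{W^*}$, so $\lceil T/\sqrt{W^*}\rceil\le T/\sqrt{W^*}+1$ and the extra ``$+1$'' contributes only $66.12\,c$, a term of strictly lower order than $l^{1/3}(Tc)^{2/3}$; hence, up to this lower-order correction, it is enough to minimize the smooth function $\tilde g(W)=2lW+66.12\,Tc/\sqrt W$. Solving $\tilde g'(W)=2l-33.06\,Tc\,W^{-3/2}=0$ yields exactly $W^*=(16.53\,Tc/l)^{2/3}$ (explaining the constant: $16.53=66.12/4$), and plugging back in, the two terms balance: $2lW^*=2\cdot16.53^{2/3}\,l^{1/3}(Tc)^{2/3}$ and $66.12\,Tc/\sqrt{W^*}=66.12\cdot16.53^{-1/3}\,l^{1/3}(Tc)^{2/3}$. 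Their sum is $\big(2\cdot16.53^{2/3}+66.12\cdot16.53^{-1/3}\big)l^{1/3}(Tc)^{2/3}=3\cdot2^{-1/3}\,66.12^{2/3}\,l^{1/3}(Tc)^{2/3}\approx38.94\,l^{1/3}(Tc)^{2/3}$, and expanding $(Tc)^{2/3}=T^{2/3}D^{2/3}S^{2/3}(A\log(T/\delta))^{1/3}$ gives the claimed bound.

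The routine calculus is a one-liner, so the only real care is bookkeeping: one must verify that the small gap between the exact minimizing value ($3\cdot2^{-1/3}\,66.12^{2/3}$, which is just below $38.94$) and the displayed constant $38.94$, together with the regime restriction on $T$, really does absorb the ``$+1$'' coming from the ceiling and the various roundings of $16.53$, $66.12$ and $38.94$; and that the two side conditions on $W^*$ inherited from Theorem~\ref{Thm:UC_SWUCRL} hold throughout. I expect this accounting — rather than the optimization itself — to be the fiddly part.
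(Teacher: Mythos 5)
Your proposal is correct and follows essentially the same route as the paper: substitute $W^*$ into the bound of Theorem~\ref{Thm:UC_SWUCRL}, observe that the two terms balance, and combine the constants as $2\cdot 16.53^{2/3}+66.12\cdot 16.53^{-1/3}\leq 38.94$. In fact you are somewhat more careful than the paper's own proof, which silently replaces $\left\lceil T/\sqrt{W^*}\right\rceil$ by $T/\sqrt{W^*}$ and does not verify that $W^*$ satisfies the window-size preconditions of Theorem~\ref{Thm:UC_SWUCRL}; the bookkeeping you flag is exactly what the paper omits.
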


\iftoggle{long-version}{The proof of this corollary is detailed in Appendix~\ref{Proof:Cor:OptWindow}.}{For the proof of this corollary, we refer to the extended version of this article.}

This bound improves upon the bound provided for \ucrl \ with restarts (\citet[Theorem 6]{Jaksch:2010:NRB:1756006.1859902}) in terms of dependence of $D$, $S$ and $A$. Our bound features $D^{2/3}$, $S^{2/3}$ and $A^{1/3}$ while the provided bound for  \ucrl \ with restarts features $D$, $S$ and $A^{1/2}$. We note however that it might be be possible to get an improved bound for \ucrl \ with restarts using an optimized restarting schedule.  

Finally, we also obtain the following PAC-bound for our algorithm.
\begin{myCorollary}\label{Cor:SamComp}
Given a switching-MDP problem with $l$ changes,
with probability at least $1 - \delta$, the average per-step regret of \swucrl \ using $W^* = \left(\frac{16.53}{l} T DS \sqrt{A\log{ \left( \frac{T}{\delta} \right)}}\right)^{2/3}$ is at most $\epsilon$ after any $T$ steps with
$$
T \geq 2 \cdot (38.94)^3 \cdot\frac{ l D^2 S^2 A}{\epsilon^3} \log{\left(\frac{(38.94)^3 l D^2 S^2 A}{\epsilon^3\delta}\right)}.
$$
\end{myCorollary}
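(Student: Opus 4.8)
The plan is to derive the PAC-bound from Corollary~\ref{Cor:OptWindow} by a standard averaging argument. Corollary~\ref{Cor:OptWindow} gives, with probability at least $1-\delta$, a cumulative regret bound of $\Regret(\SMDP, \alg, s, T) \leq 38.94 \cdot l^{1/3} T^{2/3} D^{2/3} S^{2/3} (A\log(T/\delta))^{1/3}$. Dividing by $T$, the average per-step regret is at most
$$
\frac{38.94 \cdot l^{1/3} D^{2/3} S^{2/3} (A\log(T/\delta))^{1/3}}{T^{1/3}}.
$$
The goal is to choose $T$ large enough so that this quantity is at most $\epsilon$; solving the inequality $38.94 \cdot l^{1/3} D^{2/3} S^{2/3} (A\log(T/\delta))^{1/3} \leq \epsilon T^{1/3}$ for $T$ yields the claimed threshold.

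\textbf{Key steps in order.} First I would write out the inequality $\frac{\Regret}{T} \leq \epsilon$ using the Corollary~\ref{Cor:OptWindow} bound, cube both sides, and rearrange to $T \geq \frac{(38.94)^3 \, l D^2 S^2 A \log(T/\delta)}{\epsilon^3}$. This is an implicit inequality in $T$ because of the $\log(T/\delta)$ term on the right-hand side. The standard trick is to verify that the proposed value $T_0 \Defined 2 \cdot (38.94)^3 \cdot \frac{l D^2 S^2 A}{\epsilon^3} \log\!\left(\frac{(38.94)^3 l D^2 S^2 A}{\epsilon^3\delta}\right)$ satisfies this implicit inequality. Concretely, I would substitute $T = T_0$ and check that $\log(T_0/\delta) \leq 2 \log\!\left(\frac{(38.94)^3 l D^2 S^2 A}{\epsilon^3\delta}\right)$, which combined with the factor of $2$ in $T_0$ gives exactly what is needed. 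To verify the logarithmic estimate, let $K \Defined \frac{(38.94)^3 l D^2 S^2 A}{\epsilon^3}$, so $T_0 = 2 K \log(K/\delta)$ and the claim becomes $\log(2 K \log(K/\delta)/\delta) \leq 2 \log(K/\delta)$, i.e. $2 K \log(K/\delta)/\delta \leq (K/\delta)^2$, i.e. $2\log(K/\delta) \leq K/\delta$, which holds whenever $K/\delta$ is at least a small constant (using $2\log x \leq x$ for $x \geq 1$, say, after noting $K/\delta$ is large in the regime of interest). Finally I would note that for $T \geq T_0$ the per-step regret is monotonically decreasing (the numerator grows only logarithmically while the denominator grows polynomially), so the bound $\epsilon$ continues to hold for all larger $T$ as well, and one should also check $T_0$ meets the side condition $T \geq \max(8\delta, 2A\delta)$ required by Corollary~\ref{Cor:OptWindow}, which is immediate since $T_0$ is large.

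\textbf{Main obstacle.} The only real subtlety is handling the implicit $\log(T/\delta)$ dependence cleanly, i.e. justifying the ``extra factor of $2$'' absorption in a way that is uniformly valid; this requires a mild lower-bound assumption on $K/\delta$ (equivalently on $l D^2 S^2 A / (\epsilon^3 \delta)$) so that $2\log(K/\delta) \leq K/\delta$, which is harmless in the intended small-$\epsilon$ regime. Everything else is routine algebraic rearrangement of the cumulative bound from Corollary~\ref{Cor:OptWindow}. For the full details of this standard sample-complexity conversion, \iftoggle{long-version}{we refer to the appendix.}{we refer to the extended version of this article.}
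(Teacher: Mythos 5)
Your proposal is correct and follows essentially the same route as the paper's own proof: divide the Corollary~\ref{Cor:OptWindow} bound by $T$, cube and rearrange to the implicit condition $T \geq \tempVar\log(T/\delta)$ with $\tempVar = (38.94)^3 l D^2 S^2 A/\epsilon^3$ (your $K$), and verify that $T_0 = 2\tempVar\log(\tempVar/\delta)$ satisfies it via the $x > 2\log x$ trick. Your added remarks on the mild lower-bound requirement for $K/\delta$, the monotonicity for $T \geq T_0$, and the side condition $T \geq \max(8\delta, 2A\delta)$ are points the paper glosses over, but they do not change the argument.
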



\iftoggle{long-version}{The proof of this corollary is detailed in Appendix~\ref{Proof:Cor:SamComp}.}{Please consult the extended version of this article for the proof of this corollary.}


\section{Analysis of Sliding Window UCRL}\label{sec:EProof}
The regret can be split up into two components: the regret incurred due to the changes 
in the MDP ($\RegretC$) and the regret incurred when the MDP remains the same ($\RegretNc$). Due to the definition of \swucrl, a change in the MDP can only affect the episode in which the said change has occurred or the following episode. Due to the episode stopping criterion, the length of an episode can at-most be equal to
the window size. Hence $\RegretC \leq 2lW$.

Now, we compute the regret in the episodes in which the MDP doesn't change. This computation is similar to the analysis of \textsc{Ucrl2} in \cite[Section 4]
{Jaksch:2010:NRB:1756006.1859902}. We define the regret in episode $k$ in which the switching-MDP doesn't change its configuration and only stays in configuration $M_i$ as
$$
\RegretNc_k \Defined \sum_{s,a} v_k(s,a)\left( \OptAvgReward(t_k) - \MeanReward(s,a)\right).
$$
Then ---now considering only episodes which are not affected by changes---, one can show that
\begin{equation}
\label{Eq:EpSplit}
\RegretNc \leq \sum_{k=1}^m \RegretNc_k + \sqrt{\frac{5}{2} T \log{\left(\frac{8T}{\delta}\right)}} 
\end{equation}
with probability at least $1 - \frac{\delta}{12T^{5/4}}$
where $m$ is the respective number of episodes up to time-step $T$.

Denoting the unchanged MDP in episode $k$ as $\MDP_k$
, with probability at least $1 - \frac{\delta}{12T^{5/4}}$,
\begin{equation}
\label{Eq:FailConfidence}
\sum_{k=1}^{m} \RegretNc_k \ind_{\MDP_k \notin \PlausibleMDP_k} \leq \sqrt{T}.
\end{equation}
Furthermore, as for derivation of \eqref{Eq:EpSplit} and \eqref{Eq:FailConfidence} following the proof of  \cite{Jaksch:2010:NRB:1756006.1859902}, one can show that 
\begin{align*}
\sum_{k=1}^{m} \RegretNc_k \ind_{\MDP_k \in \PlausibleMDP} \quad
&\leq \quad  D \sqrt{14S \log{ \left( \frac{2AT}{\delta}\right) }} \sum_{k=1}^{m} \sum_{s,a} \frac{v_k(s,a)}{\sqrt{\max{ \{ 1, N_k(s,a) \} }}} \\
& \qquad \quad + D \sqrt{\frac{5}{2} T \log{ \left( \frac{8T}{\delta} \right)}} + mD \\
& \qquad \quad + \left( \sqrt{14 \log{\left( \frac{2SAT}{\delta} \right)} } + 2 \right)  \sum_{k=1}^{m} \sum_{s,a} \frac{v_k(s,a)}{\sqrt{\max{ \{ 1, N_k(s,a) \} }}}.
\end{align*}

To proceed from here, we make use of the following novel lemmas which present some challenges related to handling the limitation of history to the sliding window. 
\begin{myLemma}
\label{Lemma:NumEpisodes}
Provided that $W\geq SA$, the number $m$ of episodes of \swucrl \ up to time-step $T \geq SA$ is upper bounded as 
$$
m \leq \left\lceil \frac{T}{W} \right\rceil  SA \log_2{ \left( \frac{8W}{SA} \right)}.
$$
\end{myLemma}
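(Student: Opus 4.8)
The plan is to chop the horizon $\{1,\dots,T\}$ into $\lceil T/W\rceil$ consecutive blocks, each of length at most $W$, and to bound by $SA\log_2(4W/SA)$ the number of episodes whose \emph{start time} $t_k$ lies in a given block; summing over the $\lceil T/W\rceil$ blocks and using $\log_2(4W/SA)\le\log_2(8W/SA)$ then gives the claim with room to spare. (If the run ends inside an episode, one pretends that last episode ran to completion; since no episode lasts more than $W$ steps, all its visits still fall within $2W$ steps of the start of its block, so this affects nothing below.) So everything reduces to a \emph{per-block} episode count.

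Fix a block $B=[\tau,\tau+W)$. Each completed episode $k$ starting in $B$ terminates because, for the state $s$ in which the while-loop exits and $a=\tpi{k}(s)$, the within-episode count reaches $v_k(s,a)=\max\{1,N_k(s,a)\}$; call $(s,a)$ a \emph{critical pair} of $k$, and note that $(s,a)$ is then played \emph{exactly} $\max\{1,N_k(s,a)\}$ times during episode $k$. Fix $(s,a)$ and list the episodes starting in $B$ with critical pair $(s,a)$ as $k_1<\dots<k_p$. The crucial observation — the substitute for the monotonicity exploited in the stationary analysis of \citet{Jaksch:2010:NRB:1756006.1859902} — is \emph{locality}: since $t_{k_1},\dots,t_{k_{j+1}}$ all lie inside the length-$W$ block $B$, we have $t_{k_{j+1}}-t_{k_1}<W$, so the look-back window $[t_{k_{j+1}}-W,t_{k_{j+1}})$ that defines $N_{k_{j+1}}(s,a)$ contains the whole time span of episodes $k_1,\dots,k_j$. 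Hence
\[
N_{k_{j+1}}(s,a)\ \ge\ \sum_{l=1}^{j}\max\{1,N_{k_l}(s,a)\}.
\]
Writing $a_l:=\max\{1,N_{k_l}(s,a)\}$ this reads $a_{j+1}\ge\sum_{l\le j}a_l$ with $a_1\ge1$, which forces the partial sums to (at least) double, so $\sum_{l=1}^{p}a_l\ge 2^{\,p-1}$.

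To finish, let $V_{s,a}$ be the number of steps in the $2W$-length interval $[\tau,\tau+2W)$ at which the pair $(s,a)$ is played. The episodes $k_1,\dots,k_p$ all lie inside this interval (they start in $B$ and last $\le W$ steps) and play $(s,a)$ on pairwise disjoint step sets, so $\sum_{l=1}^p a_l\le V_{s,a}$, whence $2^{\,p-1}\le V_{s,a}$, i.e.\ $p\le 1+\log_2 V_{s,a}$ for pairs with $V_{s,a}\ge1$ (and $p=0$ otherwise). Summing over the $k'\le SA$ pairs with $V_{s,a}\ge1$ and using $\sum_{s,a}V_{s,a}\le 2W$, concavity of $\log_2$ (Jensen) bounds the per-block episode count by $k'+k'\log_2(2W/k')=k'\log_2(4W/k')$; since $W\ge SA$ the map $x\mapsto x\log_2(4W/x)$ is nondecreasing on $[1,SA]$, so the block contributes at most $SA\log_2(4W/SA)$ episodes. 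Multiplying by the number $\lceil T/W\rceil$ of blocks and bounding $4W/SA\le 8W/SA$ yields the lemma.

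The step I expect to need the most care is the displayed inequality $N_{k_{j+1}}(s,a)\ge\sum_{l\le j}\max\{1,N_{k_l}(s,a)\}$: this is precisely where the sliding window — which makes $N_k(s,a)$ non-monotone in $k$ and so destroys the classical ``every episode doubles some count'' argument — is brought under control, by working one $W$-block at a time so that all relevant episodes are mutually within $W$ steps and a later episode's look-back window still sees all earlier ones. The rest (the doubling of partial sums, and the Jensen-plus-monotonicity bookkeeping that converts per-pair bounds into the $SA\log_2(\cdot)$ form) is routine, though one should be careful to (i) account for the at-most-one incomplete final episode, and (ii) pick the $2W$-length interval for each block so that it provably contains every visit being counted.
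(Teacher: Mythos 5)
Your proof is correct, and it uses the same outer decomposition as the paper --- cutting the horizon into $\lceil T/W\rceil$ blocks of length $W$ and bounding the episode count per block --- but the per-block argument is genuinely different. The paper argues informally that the worst case within a block is the one where all window counts are zero at the block's start, and then invokes the known \textsc{Ucrl2} episode-count bound $SA\log_2(8W/SA)$ for that worst case; this exchange/greedy step (``the maximal number of episodes is obtained if each episode is shortest possible, hence if all counts are zero'') is asserted rather than proved. You instead give a direct, self-contained doubling argument: the key observation that for episodes $k_1<\dots<k_{j+1}$ all starting within one length-$W$ block, the look-back window of $k_{j+1}$ contains the entire time span of $k_1,\dots,k_j$, so $N_{k_{j+1}}(s,a)\geq\sum_{l\leq j}\max\{1,N_{k_l}(s,a)\}$, exactly restores the monotonicity that the sliding window destroys globally, and from there the partial-sum doubling, the budget $\sum_{s,a}V_{s,a}\leq 2W$, and Jensen give $SA\log_2(4W/SA)$ per block, slightly sharper than needed. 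What your route buys is rigor (no appeal to an unproved worst-case reduction) and independence from the external lemma of \citet{Jaksch:2010:NRB:1756006.1859902}; what the paper's route buys is brevity. Two small points to keep an eye on: the claim that no episode exceeds $W$ steps (which you inherit from the paper's Section 3 and which both proofs rely on) is really ``at most $W$ plus the number of pairs with $N_k=0$,'' so strictly one should take the containing interval a bit longer than $2W$ --- this only perturbs constants inside the $\log_2$ and is absorbed by the slack $4W/SA\leq 8W/SA$; and your treatment of the possibly incomplete final episode (completing it hypothetically) is fine but should be stated as affecting only the single block containing $T$.
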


\iftoggle{long-version}{The proof for Lemma \ref{Lemma:NumEpisodes} is given in Appendix \ref{Proof:NumEpisodes}.}{Please consult the extended version of this article for the proof of this Lemma.} Here we only provide a key idea behind the proof. We argue that the number of episodes in a batch are maximum, if the state-action counts at the first step of the batch are all $0$. Summing up such maximal number of episodes for batches of size $W$ gives the claimed bound. 

\begin{myLemma}
\label{Lemma:vN}
$$\sum_{k=1}^{m} \sum_{s,a} \frac{v_k(s,a)}{\sqrt{\max{ \{ 1, N_k(s,a) \} }}} \leq (2\sqrt{2}  + 2) \left \lceil \frac{T}{W} \right \rceil \sqrt{SAW}.$$
\end{myLemma}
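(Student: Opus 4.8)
The plan is to bound the sum $\sum_{k}\sum_{s,a} v_k(s,a)/\sqrt{\max\{1,N_k(s,a)\}}$ by splitting time into $\lceil T/W\rceil$ consecutive blocks ("batches") of $W$ steps each, analyzing the contribution of each batch separately, and then summing. Within a single batch I will follow the counting argument from the analysis of \textsc{Ucrl2} (\citet{Jaksch:2010:NRB:1756006.1859902}, proof of Lemma 19 there) but with the twist that the relevant "prior" counts $N_k(s,a)$ only see the last $W$ steps, and in the worst case, at the start of a batch, all those counts are $0$.

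\textbf{Key steps.} First, I would recall the episode termination rule: episode $k$ stops as soon as some $(s,a)$ has $v_k(s,a) = \max\{1,N_k(s,a)\}$, so throughout episode $k$ we have $v_k(s,a) \le \max\{1, N_k(s,a)\}$ for every $(s,a)$. This gives the pointwise bound $v_k(s,a)/\sqrt{\max\{1,N_k(s,a)\}} \le \sqrt{\max\{1,N_k(s,a)\}} \le v_k(s,a)/\sqrt{\max\{1,N_k(s,a)\}}$ — more usefully, it yields $v_k(s,a)/\sqrt{\max\{1,N_k(s,a)\}} \le \min\{v_k(s,a), \sqrt{v_k(s,a)\cdot\text{(something)}}\}$; the clean inequality I actually want is the standard one
\begin{equation*}
\sum_{k} \frac{v_k(s,a)}{\sqrt{\max\{1,N_k(s,a)\}}} \le (\sqrt{2}+1)\sqrt{\mathscr{N}(s,a)}
\end{equation*}
whenever the episodes all lie in a window where the running count for $(s,a)$ accumulates from $0$ to a final value $\mathscr{N}(s,a)$, because doubling-type arguments give $\sum_k v_k / \sqrt{\max\{1, N_k\}}$ telescoping like $\int_0^{\mathscr N} x^{-1/2}\,dx$. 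Here, though, the sliding window means the "$N_k$ seen so far in this batch" is at most the true count in the batch, so for a single batch $b$ with per-$(s,a)$ counts $\mathscr{N}_b(s,a)$ (and $\sum_{s,a}\mathscr{N}_b(s,a) \le W$) I get $\sum_{k \in b}\sum_{s,a} v_k(s,a)/\sqrt{\max\{1,N_k(s,a)\}} \le (\sqrt2+1)\sum_{s,a}\sqrt{\mathscr{N}_b(s,a)}$, and by Cauchy–Schwarz $\sum_{s,a}\sqrt{\mathscr{N}_b(s,a)} \le \sqrt{SA\sum_{s,a}\mathscr{N}_b(s,a)} \le \sqrt{SAW}$. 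Second, I would sum over the $\lceil T/W\rceil$ batches to obtain $(\sqrt2+1)\lceil T/W\rceil\sqrt{SAW}$, and then absorb the constant into $(2\sqrt2+2)$, which is exactly $2(\sqrt2+1)$ — the extra factor of $2$ presumably covers boundary episodes straddling two batches (an episode that begins in batch $b$ but whose prior-$W$-step count partly reaches into batch $b-1$), which I would handle by charging such an episode to both batches it touches.

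\textbf{Main obstacle.} The delicate point is the per-batch doubling argument in the sliding-window setting: unlike vanilla \textsc{Ucrl2}, where $N_k(s,a)$ is the total count before episode $k$ and monotonically increases, here $N_k(s,a)$ counts only occurrences in the trailing window of width $W$, so it is \emph{not} monotone across episodes and can even exceed the within-batch count when the window reaches back before the batch started. I need to argue carefully that, restricted to a single batch and accounting for at most one extra "carry-over" episode at each batch boundary, the sum still behaves like the monotone case — essentially that the within-batch occurrences of $(s,a)$ form a sequence whose partial sums dominate the relevant $N_k$ values up to the boundary correction, so the telescoping/integral bound $\sum \Delta_i/\sqrt{(\text{partial sum so far})} \le 2\sqrt{\text{total}}$ applies. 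Pinning down this accounting cleanly, and confirming the factor $2\sqrt2 + 2$ rather than something smaller, is where the real work lies; everything after that is Cauchy–Schwarz and summation over batches.
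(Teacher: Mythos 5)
Your overall architecture matches the paper's: partition time into $\mathcal{O}(T/W)$ batches, prove a per-batch, per-pair doubling bound, apply Jensen/Cauchy--Schwarz over $(s,a)$, and sum over batches. But the proposal stops exactly where the new work begins. The per-batch, per-pair inequality you need --- that $\sum_{k\in E_b} v_k(s,a)/\sqrt{\max\{1,N_k(s,a)\}}$ is controlled by the square root of the count of $(s,a)$ inside batch $b$, up to a boundary correction --- is asserted rather than proved, and your ``main obstacle'' paragraph explicitly concedes that you do not know how to establish it. That inequality is the actual content of the lemma. The paper supplies it as a standalone proposition proved by induction: decompose $N_k(s,a)=N^{+}_k(s,a)+N^{-}_k(s,a)$, where $N^{+}_k$ is the within-batch prior count (so $x_k=\sum_{i<k}z_i$ with $z_i=v_i(s,a)$ and $x_1=0$) and $N^{-}_k$ is the carry-over from the previous batch (a nonincreasing sequence $y_k\le Y:=N^{b-1}(s,a)$); then, using only $v_k(s,a)\le\max\{1,N_k(s,a)\}$ from the stopping rule, one shows $\sum_k z_k/\sqrt{\max\{1,x_k+y_k\}}\le \sqrt{Y}+(\sqrt2+1)\sqrt{\sum_k z_k}$. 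The additive $\sqrt{N^{b-1}(s,a)}$ is the price of the early episodes of a batch, whose lengths are calibrated to counts inherited from the previous batch rather than to anything yet accumulated in the current one; your claimed per-batch bound $(\sqrt2+1)\sum_{s,a}\sqrt{\mathscr{N}_b(s,a)}$ omits it, and after summing over $(s,a)$ this term contributes an extra $\sqrt{2SAW}$ per batch, which (together with batch lengths up to $2W$ rather than $W$) is where the constant $2\sqrt2+2=\bigl(1+(\sqrt2+1)\bigr)\sqrt2$ actually comes from.

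Two further accounting problems. First, your explanation of the factor of two (``boundary episodes straddling two batches'') is not how the constant arises and creates a difficulty of its own: with fixed blocks of length $W$, an episode can straddle a boundary and its $v_k$ mass must be split between blocks, which breaks the telescoping you want to run within each block. The paper sidesteps this by defining each batch to end at the first episode termination after its length reaches $W$, so every episode lies in exactly one batch, batch lengths lie in $[W,2W]$, and $\sum_{s,a}N^b(s,a)\le 2W$. Second, the chain of inequalities in your ``key steps'' paragraph beginning ``this gives the pointwise bound'' is circular as written and should be discarded; the only pointwise consequence of the stopping rule you need is $v_k(s,a)\le\max\{1,N_k(s,a)\}$. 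Until the per-batch induction handling the carry-over counts is actually carried out, the proof is incomplete.
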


\iftoggle{long-version}{The detailed proof for Lemma \ref{Lemma:vN} is given in Appendix \ref{Proof:vN}. }{We refer to the extended version of this article for the detailed proof of this lemma.} Here, we provide a brief overview of the proof. 

\textit{Proof sketch.}
Divide the time horizon into batches such that first batch starts at $t=1$ and each batch ends with the earliest episode termination after the batch size reaches $W$. Then $W \leq $ size of each batch $\leq 2W$ and the number of batches $|\NumBatches| \leq \left\lceil \frac{T}{W} \right\rceil $. Let $N^{+}_k(s,a) \Defined \#(s,a)$ in the current batch when episode $k$ starts, $N^{-}_k(s,a) \Defined  N_k(s,a) - N^{+}_k(s,a)$, and $N^b(s,a) \Defined \#(s,a)$ in batch $b$. Then, $\sum_{s,a}N^b(s,a) \leq 2W$ and we have

\begin{align*}
 \sum_{k=1}^{m} \sum_{s,a} \frac{v_k(s,a)}{\sqrt{\max{ \{ 1, N_k(s,a) \} }}} &=  \sum_{s,a} \sum_{k=1}^{m}  \frac{v_k(s,a)}{\sqrt{\max{ \{ 1, N^{+}_k(s,a) + N^{-}_k(s,a) \} }}} \\
&\leq \sum_{s,a} \sum_{b=1}^{\NumBatches} \left( \sqrt{N^{b-1}(s,a)} + \left( \sqrt{2} + 1 \right) \sqrt{\sum_{k \in \EpInBatch_b} v_k(s, a) }\right) \\
&\leq \sum_{b=1}^{\NumBatches} \left( \sqrt{2SAW} +  \left( \sqrt{2} + 1 \right) \sqrt{2SAW} \right) \\
&\leq  \left\lceil \frac{T}{W} \right\rceil  \left( 2\sqrt{2} + 2 \right)\sqrt{SAW}.
\end{align*}
The first inequality follows from a proposition 
\iftoggle{long-version}{\ref{Prop1} given in Appendix \ref{sec:Prop1}}{ provided in the extended version of this article}, while the second inequality follows from Jensen's inequality. 
\qed

Using Lemma \ref{Lemma:NumEpisodes} and Lemma \ref{Lemma:vN}, we get that, with probability at least $1 - \frac{\delta}{12T^{5/4}}$


\begin{align}
\sum_{k=1}^{m} \Delta_k \ind_{\MDP_k \in \PlausibleMDP} \quad
&\leq \quad  D \sqrt{\frac{5}{2} T \log{ \left( \frac{8T}{\delta} \right)}} +  \left\lceil \frac{T}{W} \right\rceil  DSA \log_2{ \left( \frac{8W}{SA} \right)} \nonumber \\
& \qquad \quad + \left( 2D \sqrt{14S \log{\left( \frac{2AT}{\delta} \right) }} + 2\right) (2\sqrt{2} + 2) \left \lceil \frac{T}{W} \right \rceil \sqrt{SAW} .\label{Eq:SucConfidence}
\end{align}

Then, using Eq. (\ref{Eq:EpSplit}), (\ref{Eq:FailConfidence}) and (\ref{Eq:SucConfidence}), with probability at least $1 - \frac{\delta}{12T^{5/4}} 
- \frac{\delta}{12T^{5/4}} - \frac{\delta}{12T^{5/4}}$,

\begin{align}
\RegretNc &\leq \underbrace{\sqrt{\frac{5}{8} T \log{\left( \frac{8T}{\delta} \right)}} + \sqrt{T} + D\sqrt{\frac{5}{2} T \log{\left( \frac{8T}{\delta} \right)}}}_{\circled{$E_1$}} \nonumber \\
& \quad + \underbrace{\left\lceil \frac{T}{W} \right\rceil  DSA \log_2{ \left( \frac{8W}{SA} \right)} + \left( 2D \sqrt{14S \log{\left( \frac{2AT}{\delta} \right) }} + 2\right) (2\sqrt{2} + 2) \left \lceil \frac{T}{W} \right \rceil \sqrt{SAW}}_{\circled{$E_2$}} \label{Eq:DivParts},
\end{align}

\begin{equation}\label{Eq:ValPart1}
\mbox{ and } \circled{$E_1$} \leq 4.36 D \sqrt{T \log{\left(\frac{T}{\delta}\right)}}  \qquad \text{ at } T \geq 8 \delta ,
\end{equation} 

\begin{equation}
\circled{$E_2$} \leq 61.76 \left\lceil \frac{T}{\sqrt{W}} \right\rceil DS \sqrt{A \log{ \left( \frac{T}{\delta} \right)}} \quad \text{if }  T \geq 2A\delta \text{ and } W  \geq \frac{A \left( \log_2{(8W/SA)} \right)^2 }{\log{(T/\delta)}}. \label{Eq:ValPart2}
\end{equation}

For the claimed simplifications of $E_1$ and $E_2$, see 
\iftoggle{long-version}{Appendix \ref{sec:E1} and \ref{sec:E2} respectively.}{the extended version of this article.}
%
\noindent From Eq. (\ref{Eq:DivParts}), (\ref{Eq:ValPart1}), and (\ref{Eq:ValPart2}) and since $\sum_{T=2}^{\infty} \frac{\delta}{4T^{5/4}} < \delta  $, with probability at least $1 - \delta$, the regret incurred during the episodes in which the MDP doesn't change is
$$
 \RegretNc \leq 66.12 \left\lceil \frac{T}{\sqrt{W}} \right\rceil DS \sqrt{A\log{ \left( \frac{T}{\delta} \right)}}  \quad \text{if } T \geq \max{(8 \delta, 2A\delta)} \text{ and } W \geq \max{ \left( SA, \frac{A \left( \log_2{(8W/SA)} \right)^2 }{\log{(T/\delta)}}\right) }. 
$$
\noindent Adding the regret incurred in the episodes not affected by changes,
$$
\Regret(\SMDP, \alg, s, T) \leq 2lW + 66.12 \left\lceil \frac{T}{\sqrt{W}} \right\rceil DS \sqrt{A\log{ \left( \frac{T}{\delta} \right)}}.
$$


\section{Experiments}
\label{sec:Exp}
  \begin{figure}[t]
     \subfloat[Average regret plots for $2$ changes \label{fig:PlotwrtTpartA}]{%
       \includegraphics[width=0.5\textwidth]{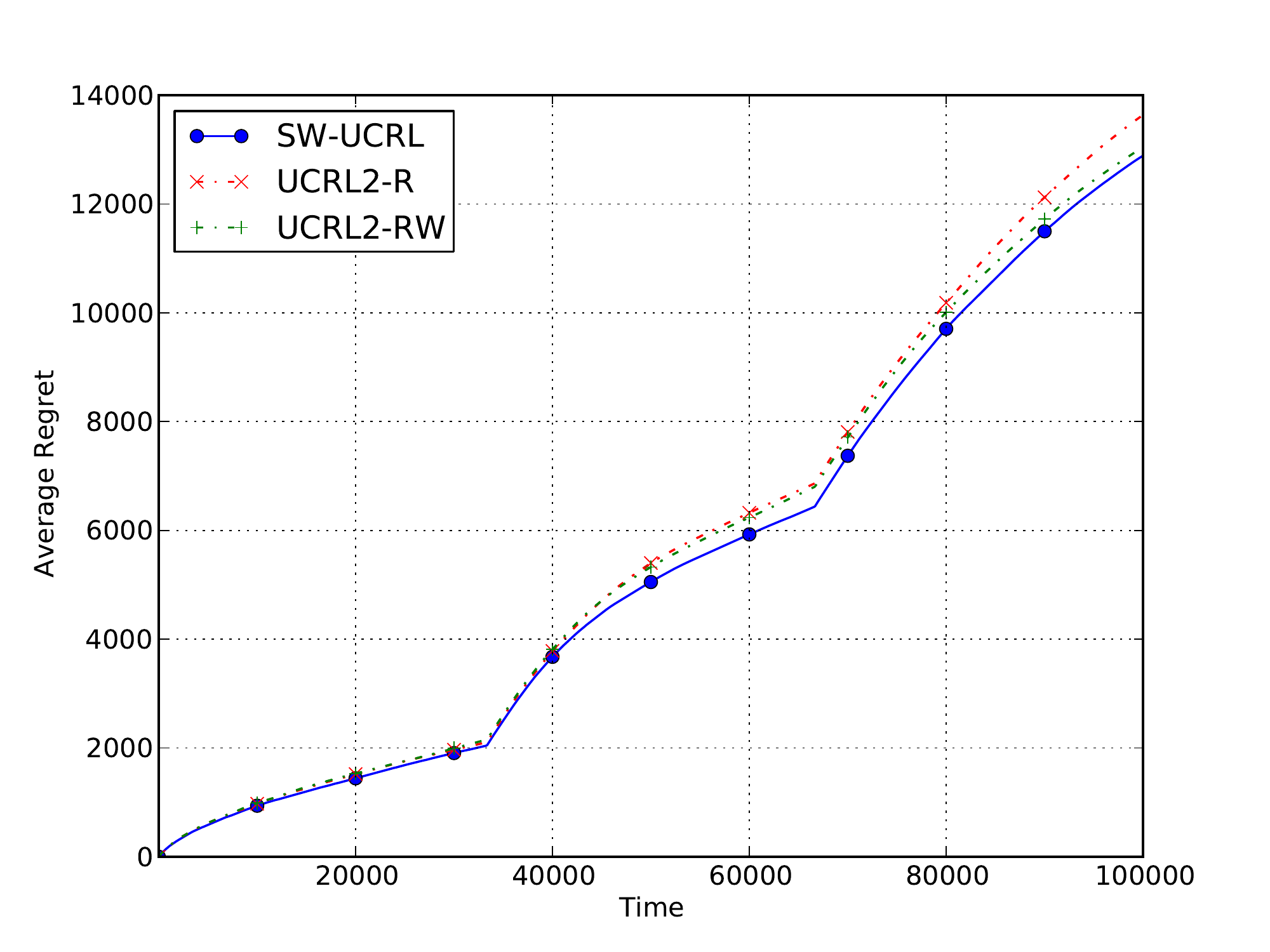}
     }
     \hfill
    \subfloat[Average regret plots for $4$ changes \label{fig:PlotwrtTpartB}]{%
       \includegraphics[width=0.5\textwidth]{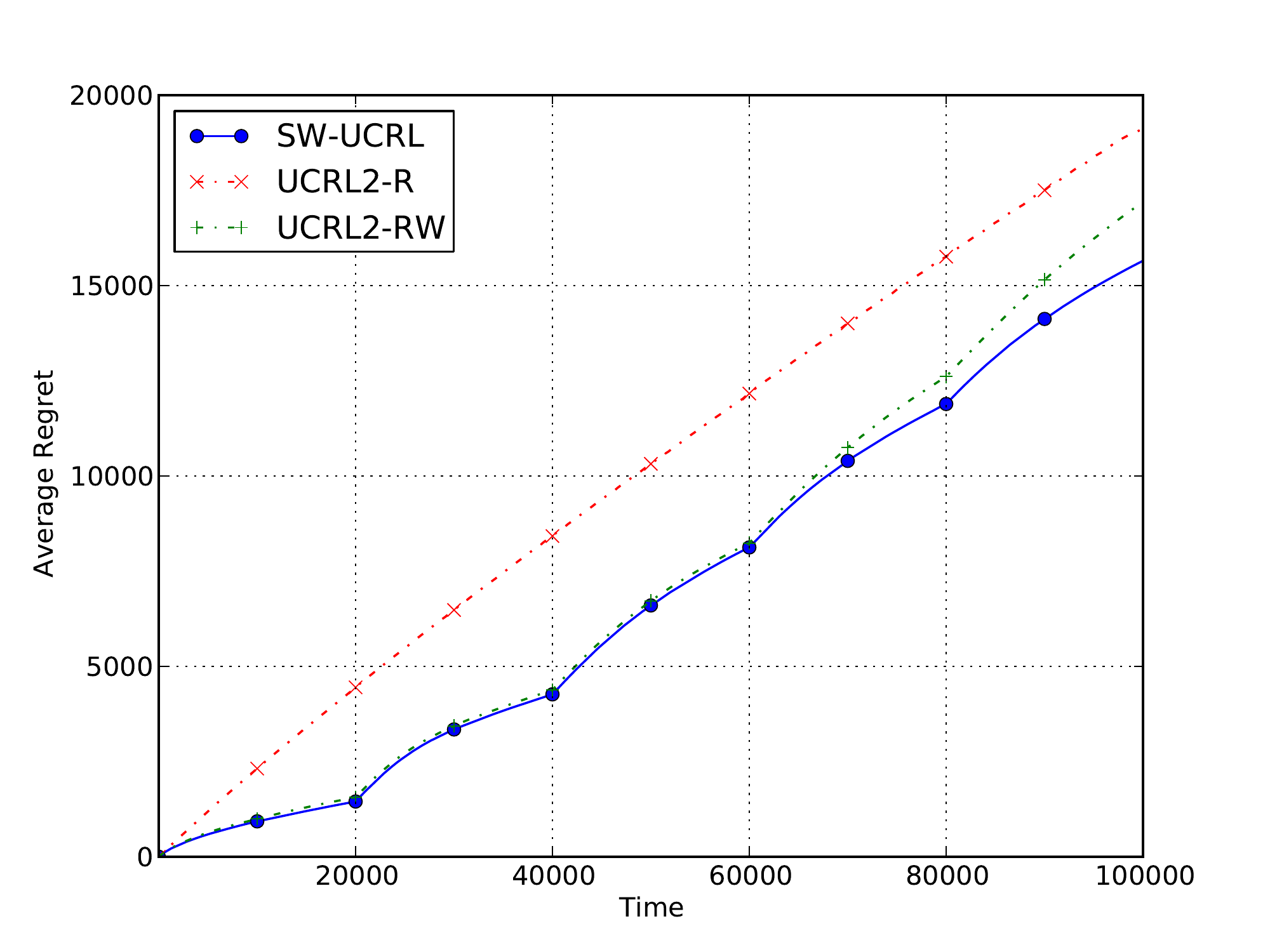}
     }     
     \caption{Average regret plots for switching-MDPs with $S = 5, A = 3$ and $T = 100000$}
     \label{fig:PlotwrtT}
	\end{figure}
    
For practical evaluation, we generated switching-MDPs with $S = 5, A = 3$, and $T = 100000$. The $l$ changes are set to happen at every $\lceil \frac{T}{l}\rceil$ time steps. This simple setting can be motivated from the ad example given in Section \ref{sec:Int} in which changes happen at regular intervals. 

For \swucrl, the window size was chosen to be the optimal as given by Eq.(\ref{Eq:OptWindow}), using a lower bound of $\log_{A}{S} - 3$ for the diameter. 
For comparison, we used two algorithms : \ucrl \ with restarts as given in \citet{Jaksch:2010:NRB:1756006.1859902} (referred to as \ucrlR \ henceforth) and \ucrl \ with restarts after every $W^*$ time steps (referred to as \ucrlRW \ henceforth). 
Note that the latter restarting schedule is a modification by us, not provided by \citet{Jaksch:2010:NRB:1756006.1859902}.
\swucrl, \ucrlR, and \ucrlRW \ were run with $\delta = 0.1$ on $1000$ switching-MDP problems with random rewards and state-transition probabilities.

Figure \ref{fig:PlotwrtTpartA} shows the average regret for $2$ changes and Figure \ref{fig:PlotwrtTpartB} for $4$ changes. A clearly noticeable trend in both plots (at least for \swucrl \ and our modification, \ucrlRW) are the ``bumps'' in regret curves at time steps where the changes occur.  That behaviour is expected as it shows that the algorithms were learning the MDP configuration indicated by the regret curves beginning to flatten, when a change to another MDP  results in an ascent of regret curves. \ucrlR, and \ucrlRW \ give only slightly worse performance when the number of changes are limited to $2$. However, even for a moderate number of changes as $4$, \swucrl \ and our modification, \ucrlRW \ are observed to give better performance than \ucrlR. In both cases, our proposed algorithm gives improved performance over \ucrlRW.



\section{Discussion and Further Directions}
\label{sec:last}
Theoretical performance guarantee and experimental results demonstrate that the algorithm introduced in this article, \swucrl, provides a competent solution for the task of regret-minimization on MDPs with arbitrarily changing rewards and state-transition probabilities. We have also provided a sample complexity bound on the number of sub-optimal steps taken by \swucrl. 

We conjecture that the sample complexity bound can be used to provide a variation-dependent regret bound, although the proof might present a few technical difficulties when handling the sliding window aspect of the algorithm. A related question is to establish a link between the extent of allowable variation in rewards and state-transition probabilities and the minimal achievable regret, as was done recently for the problem of multi-armed bandits with non-stationary rewards in \citet{NIPS2014_5378}. Another direction is to refine the episode-stopping criterion so that a new policy is computed only when the currently employed policy performs below a suitable reference value.

\newpage
\bibliographystyle{plainnat}
\bibliography{references} 


\iftoggle{long-version}{
\onecolumn
\newpage

\appendix
\renewcommand{\thesection}{\Roman{section}}
\renewcommand\thesubsection{\Alph{subsection}}
\section{Proof of Lemma \ref{Lemma:NumEpisodes}}
\label{Proof:NumEpisodes}
\begin{proof}
Divide the $T$ time steps into batches of equal size $W$ (with the possible exception of the last batch). For each of these batches we consider the maximal number of episodes contained in this batch. Obviously, the maximal number of episodes can be obtained greedily, if each episode is shortest possible. For each time step $t$ with state action counts $N(s,a)$ in the window reaching back to $t-W$, the shortest possible episode starting at $t$ (according to the episode termination criterion) will consist of $\max\{1,\min_{s,a} N(s,a)\}$ repeated visits to a fixed state-action pair contained in $\arg\min N(s,a)$.

Accordingly, in a window of size $W$, the number of episodes is largest, if the state-action counts at the first step of the batch are all $0$. For this case we know (cf.\ Lemma of \cite{Jaksch:2010:NRB:1756006.1859902}) that the number of episodes within $W$ steps is bounded by
$ 
SA \log_2{\left( \frac{8W}{SA}\right)}.
$
Summing up over all $\left \lceil \frac{T}{W} \right \rceil$ batches gives the claimed bound.

\end{proof}


\section{Technical Details for the proof of Lemma \ref{Lemma:vN}}
\label{Proof:vN}
\subsection{Proof of Lemma \ref{Lemma:vN}}
\begin{proof}
We shall prove this lemma by dividing the time horizon into of batches (different from those used in the proof of Lemma \ref{Lemma:NumEpisodes}) as follows.
The first batch starts at $t=1$ and each batch ends with the earliest episode termination after the batch size reached $W$. That way, each episode is completely contained in one batch.
As any episode can be at most of size $W$, it holds that $W \leq $ size of each batch $\leq 2W$. Therefore, the number of batches $|\NumBatches| \leq \left\lceil \frac{T}{W} \right\rceil $. 

Let $\EpInBatch_b$ be the set containing the episodes in batch b, and 
let $N^{+}_k(s,a) \Defined $ number of occurrences of state-action pair $(s,a)$ in the current batch when episode $k$ starts. Clearly $N^{+}_k(s,a) \leq N_k(s,a)$. Let $N^{-}_k(s,a) \Defined  N_k(s,a) - N^{+}_k(s,a)$. Furthermore, let $N^b(s,a) \Defined $ number of occurrences of state-action pair $(s,a)$ in batch $b$, setting $N^b(s,a):=0$. Note that $\sum_{s,a} N^b(s,a) \leq 2W$.

We have
\begin{align*}
\label{Eq:DivBatches}
 \sum_{k=1}^{m} \sum_{s,a} \frac{v_k(s,a)}{\sqrt{\max{ \{ 1, N_k(s,a) \} }}} &=  \sum_{s,a} \sum_{k=1}^{m}  \frac{v_k(s,a)}{\sqrt{\max{ \{ 1, N^{+}_k(s,a) + N^{-}_k(s,a) \} }}} \\
&= \sum_{s,a} \sum_{b=1}^{\NumBatches} \sum_{k \in \EpInBatch_b}  \frac{v_k(s,a)}{\sqrt{\max{ \{ 1, N^{+}_k(s,a) + N^{-}_k(s,a) \} }}} \\
&\leq \sum_{s,a} \sum_{b=1}^{\NumBatches} \left( \sqrt{N^{b-1}(s,a)} + \left( \sqrt{2} + 1 \right) \sqrt{\sum_{k \in \EpInBatch_b} v_k(s, a) }\right) \\
&=  \sum_{s,a} \sum_{b=1}^{\NumBatches} \left( \sqrt{N^{b-1}(s,a)} + \left( \sqrt{2} + 1 \right) \sqrt{ N^b(s,a) }\right) \\
&=  \sum_{b=1}^{\NumBatches} \left( \sum_{s,a} \sqrt{N^{b-1}(s,a)} + \sum_{s,a} \left( \sqrt{2} + 1 \right) \sqrt{ N^b(s,a) }  \right) \\ 
&\leq \sum_{b=1}^{\NumBatches} \left( \sqrt{2SAW} +  \left( \sqrt{2} + 1 \right) \sqrt{2SAW} \right) \\
&= \sum_{b=1}^{\NumBatches} \left( 2\sqrt{2} + 2 \right)\sqrt{SAW} \\
&\leq  \left\lceil \frac{T}{W} \right\rceil  \left( 2\sqrt{2} + 2 \right)\sqrt{SAW}
\end{align*}
In the above, the first inequality follows from using Proposition \ref{Prop1} with $n = |\EpInBatch_b|$, $z_k = v_k(s,a)$, $x_k = N^{+}_k(s,a)$, $y_k = N^{-}_k(s,a) $, and $Y = N^{b-1}(s,a)$, while the second inequality follows from Jensen's inequality. 
\end{proof}


\subsection{Proposition required to prove  Lemma \ref{Lemma:vN}}
\label{sec:Prop1}
\begin{myProposition}
\label{Prop1}
For any non-negative integers $x_1,\dots,x_n$, $z_1,\dots,z_n$ and $y_1,\dots ,y_n$ with the following properties
\begin{equation}
\label{Eq:propA}
x_k + z_k = x_{k+1}
\end{equation}
\begin{equation}
\label{Eq:propB}
x_k + y_k \geq z_k
\end{equation}
\begin{equation}
\label{Eq:propC}
y_n \leq y_{n-1} \dots y_2 \leq y_1 \leq Y
\end{equation}
\begin{equation}
\label{Eq:propD}
x_1 = 0
\end{equation}
\begin{equation}
\label{Eq:propE}
Z_k = \max\{1, x_k + y_k\}
\end{equation}
it holds that,
$$
\sum_{k=1}^{n} \frac{z_k}{\sqrt{Z_k}} \leq \sqrt{Y} + \left(\sqrt{2} + 1 \right) \sqrt{ \sum_{k=1}^{n}z_k } 
$$
\end{myProposition}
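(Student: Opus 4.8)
The plan is to bound each summand $z_k/\sqrt{Z_k}$ by splitting the range of $k$ according to whether $x_k$ has yet reached the level $Y$, and to control the two resulting sums separately. First I would observe that by \eqref{Eq:propA} and \eqref{Eq:propD} the sequence $x_k$ is nondecreasing with $x_k = \sum_{j<k} z_j$, so it is a partial-sum sequence; write $S \Defined \sum_{k=1}^n z_k = x_{n+1}$. Let $k_0$ be the largest index with $x_{k_0} \leq Y$ (or $k_0 = 0$ if no such index exists, handled trivially). For $k > k_0$ we have $x_k > Y \geq y_k$ by \eqref{Eq:propC}, hence $Z_k = \max\{1, x_k + y_k\} \geq x_k \geq \tfrac12(x_k + y_k) > \tfrac12 x_{k+1}$ — more usefully $Z_k \geq x_k$ and $x_k + y_k \le 2 x_k$, so I would aim for $z_k/\sqrt{Z_k} \leq (\sqrt2/\,?)(\sqrt{x_{k+1}} - \sqrt{x_k})$ via the elementary inequality $\frac{b-a}{\sqrt{b}} \le 2(\sqrt{b}-\sqrt{a})$ for $0 \le a \le b$. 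Telescoping over $k > k_0$ then yields a bound of the shape $2\sqrt{2}\sqrt{S}$ (the constant will come out right after tracking the factor-of-2 from $x_k+y_k \le 2x_k$).

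For the initial range $k \leq k_0$, the point is that here $x_k \le Y$, so the $z_k$'s accumulated in this range sum to at most $x_{k_0+1} \le x_{k_0} + z_{k_0} \le Y + y_{k_0} \le 2Y$ by \eqref{Eq:propB} and \eqref{Eq:propC}; actually I expect the cleaner route is to bound $\sum_{k \le k_0} z_k/\sqrt{Z_k} \le \sum_{k\le k_0} z_k / \sqrt{\max\{1, x_k+y_k\}}$ and then use $x_k + y_k \ge z_k$ from \eqref{Eq:propB} together with $Z_k \ge 1$ to get each term $\le \min\{z_k, \sqrt{z_k}\} \le \sqrt{z_k}$, but that does not telescope nicely. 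The sharper approach: on $k \le k_0$ use $Z_k \ge \max\{1,x_k\}$ is too weak since $x_1 = 0$; instead use $Z_k \ge \max\{1, z_k\}$ (from \eqref{Eq:propB}, since $x_k,y_k\ge 0$ gives $x_k+y_k\ge z_k$, and also $Z_k \ge 1$), giving $z_k/\sqrt{Z_k} \le \sqrt{z_k}$ — but summing $\sqrt{z_k}$ over many episodes is not $O(\sqrt{Y})$. So the genuinely right idea is to again telescope: for $k \le k_0$, since the partial sums $x_k$ stay $\le Y$ and $Z_k \ge x_k + y_k \ge$ (something comparable to the running partial sum), one gets $\sum_{k\le k_0} z_k/\sqrt{Z_k} = O(\sqrt{x_{k_0+1}}) = O(\sqrt{Y})$, and I would make this precise by the same $\frac{b-a}{\sqrt b}\le 2(\sqrt b - \sqrt a)$ trick but using $Z_k \ge$ a lower bound in terms of $x_{k+1}$, falling back on $Z_k\ge 1$ only for the finitely many steps where $x_{k+1}$ is still small.

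The main obstacle I anticipate is getting the constants exactly $\sqrt{Y}$ and $(\sqrt2+1)$ rather than some larger multiple: the naive telescoping produces factors of $2$ from $\frac{b-a}{\sqrt b}\le 2(\sqrt b-\sqrt a)$ and from $x_k+y_k \le 2x_k$, so the argument must be organized to avoid doubling twice. I would handle this by treating the very first step (or first few steps, where $Z_k$ is pinned at $1$ because $x_k+y_k$ is still small) completely separately — this contributes the clean $\sqrt{Y}$ term — and then for all remaining steps exploit that $y_k \le x_k$ once $x_k$ has grown, so that $Z_k = x_k + y_k$ can be compared to $x_k$ with a factor close to $1$ on the relevant range, tightening $2\sqrt2$ down appropriately; combined with property \eqref{Eq:propC} ($y_k$ nonincreasing, bounded by $Y$) to cap the leftover contribution by $\sqrt{Y}$. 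The monotonicity \eqref{Eq:propC} and the coupling \eqref{Eq:propA} are exactly what make the telescoping legitimate, and verifying that no term is double-counted across the split at $k_0$ is the bookkeeping I'd be most careful about.
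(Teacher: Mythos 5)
Your telescoping strategy --- writing $x_k=\sum_{j<k}z_j$ via \eqref{Eq:propA} and \eqref{Eq:propD} and bounding each summand by a difference of square roots --- is a genuinely different route from the paper's, which proceeds by induction on $n$: there the base case uses $x_1=0$ to get $z_1/\sqrt{Z_1}\le y_1/\sqrt{\max\{1,y_1\}}\le\sqrt Y$, and the inductive step squares the right-hand side and absorbs the new term using $z_n^2/Z_n\le z_n$ from \eqref{Eq:propB}. Your route is viable, but as written it does not close. You yourself flag the problem: the telescoping you describe for the range $x_k>Y$ produces a constant of $2\sqrt2$ (from $Z_k\ge x_{k+1}/2$ together with $\tfrac{b-a}{\sqrt b}\le 2(\sqrt b-\sqrt a)$), or $\sqrt3+1$ if you use $Z_k\ge x_k$ and $x_{k+1}\le 3x_k$, and both exceed the required $\sqrt2+1$; meanwhile your treatment of the initial range $k\le k_0$ cycles through several ideas you reject and ends at an unproved ``$O(\sqrt{x_{k_0+1}})=O(\sqrt Y)$'' whose implied constant is never pinned to $1$ (note also that \eqref{Eq:propB} only gives $x_{k_0+1}\le 2x_{k_0}+y_{k_0}\le 3Y$, not $\le Y+y_{k_0}$ as you write). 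Neither of the two repairs you sketch is carried out, so the stated inequality is not established.

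The single missing observation that makes your approach work --- and in fact prove something stronger --- is that \eqref{Eq:propB} together with $x_k,y_k\ge0$ gives $Z_k=\max\{1,x_k+y_k\}\ge\max\{1,x_k,z_k\}$ for \emph{every} $k$, and that for integers $a\ge 0$, $d\ge 1$ one has $\sqrt{a+d}+\sqrt a\le(\sqrt2+1)\sqrt{\max\{a,d\}}$ (bound each of the two roots by $\sqrt{2\max\{a,d\}}$ and $\sqrt{\max\{a,d\}}$ respectively). Hence, for $z_k\ge 1$,
$$
\frac{z_k}{\sqrt{Z_k}}\;\le\;\frac{z_k}{\sqrt{\max\{x_k,z_k\}}}\;=\;\frac{\sqrt{x_{k+1}}+\sqrt{x_k}}{\sqrt{\max\{x_k,z_k\}}}\left(\sqrt{x_{k+1}}-\sqrt{x_k}\right)\;\le\;\left(\sqrt2+1\right)\left(\sqrt{x_{k+1}}-\sqrt{x_k}\right),
$$
while for $z_k=0$ both sides vanish. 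Summing over all $k$ telescopes to $(\sqrt2+1)\sqrt{x_{n+1}}=(\sqrt2+1)\sqrt{\sum_{k}z_k}$, with no split at $k_0$, no use of \eqref{Eq:propC}, and no $\sqrt Y$ term needed at all. The obstacle you anticipate dissolves once $Z_k\ge z_k$ and $Z_k\ge x_k$ are used \emph{simultaneously} in the one-step bound, rather than one at a time as in your sketch; without that step your proposal has a genuine gap at exactly the point you identify.
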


\begin{proof}
First note, using \eqref{Eq:propA} and \eqref{Eq:propD}, 
\begin{equation}
\label{Eq:summingup}
x_{k} = x_{k-1} + z_{k-1} = x_{k-2} + z_{k-2} + z_{k-1} = \dots = x_1 + \sum_{i=1}^{k-1}z_i = \sum_{i=1}^{k-1}z_i
\end{equation}

We now prove the proposition by induction over $n$. \\
\textit{Base case}:\\
$$\frac{z_1}{\sqrt{Z_1}} \leq \frac{x_1 + y_1}{\sqrt{\max\{1, x_1 + y_1\}}} = \frac{y_1}{ \sqrt{ \max\{ 1,y_1\}}} \leq \sqrt{Y} + (\sqrt{2} + 1)\sqrt{z_1} $$

The first equality is true because $x_1 = 0$ and the last inequality is true because
\begin{itemize}
\item if $y_1=0$, then $\max\{1, y_1\} = 1$, and $\frac{y_1}{ \sqrt{ \max\{ 1,y_1\}}} = 0 $ and the RHS is non-negative since all $z_1$ and $y_1,\dots, y_n$ are non-negative integers. 
\item if $y_1 \geq 1$, then $\max\{1, y_1\} = y_1$ and $\frac{y_1}{ \sqrt{ \max\{ 1,y_1\}}} = \sqrt{y_1} \leq \sqrt{Y}$ using \eqref{Eq:propC}.
\end{itemize}

\textit{Inductive step:} 
\begin{align*}
\sum_{k=1}^{n} \frac{z_k}{\sqrt{Z_k}} &= \sqrt{Y} + \left(\sqrt{2} + 1 \right)  \sqrt{ \sum_{k=1}^{n-1}z_k}  + \frac{z_n}{\sqrt{Z_n}} \\
&= \sqrt{Y} + \sqrt{ \left(\sqrt{2} + 1 \right)^2 \sum_{k=1}^{n-1} z_k + 2 \left(\sqrt{2} + 1 \right) \frac{ \left(\sum_{k=1}^{n-1} z_k \right) z_n}{ \sqrt{Z_n}} + \frac{z_n^2}{Z_n} } \\
&= \sqrt{Y} + \sqrt{ \left(\sqrt{2} + 1 \right)^2  \sum_{k=1}^{n-1} z_k  + 2 \left(\sqrt{2} + 1 \right) \frac{ \left(\sum_{k=1}^{n-1} z_k \right) z_n}{ \sqrt{\max\{1, x_n + y_n\}}} + \frac{z_n^2}{\max\{1, x_n + y_n\}} } \\
&\leq \sqrt{Y} + \sqrt{ \left(\sqrt{2} + 1 \right)^2  \sum_{k=1}^{n-1} z_k  + 2 \left(\sqrt{2} + 1 \right)z_n + z_n} \\
&= \sqrt{Y} + \sqrt{ \left(\sqrt{2} + 1 \right)^2  \sum_{k=1}^{n-1} z_k  +  (\sqrt{2} + 1)^2 z_n} \\
&= \sqrt{Y} +  \left(\sqrt{2} + 1 \right)  \sqrt{ \sum_{k=1}^{n} z_k } 
\end{align*}
In the above, the first inequality is true because,
\begin{itemize}
\item if ${\max\{1, x_n + y_n\}} = 1$,
then $ z_n \leq x_n + y_n \leq 1$. Therefore, $\frac{z_n^2}{\max\{1, x_n + y_n\}} = z_n^2 \leq z_n$ and
$\frac{\left(\sum_{k=1}^{n-1} z_k \right)}{\sqrt{\max\{1, x_n + y_n\}}} = \sum_{k=1}^{n-1} z_k = x_n \leq 1 $ using \eqref{Eq:summingup} 
\item if ${\max\{1, x_n + y_n\}} = x_n + y_n$ then $\frac{z_n^2}{\max\{1, x_n + y_n\}} = \frac{z_n^2}{x_n + y_n} \leq z_n$ using \eqref{Eq:propB}
\end{itemize}
\end{proof}

\section{Proof of Corollary \ref{Cor:OptWindow}}
\label{Proof:Cor:OptWindow}
\begin{proof}
\begin{align*}
\Regret(\SMDP, \alg, s, T) &\leq 2lW^* + 66.12 \left\lceil \frac{T}{\sqrt{W^*}} \right\rceil \psi \\
						 &\leq 2 \cdot l^{1/3} \cdot \left( 16.53 \cdot T  \psi \right)^{2/3} +  66.12 \cdot \frac{T\psi l^{1/3}}{(16.53 \cdot T  \psi)^{1/3}} \qquad \text{(Using Eq. \ref{Eq:OptWindow})} \\
                         &= 2(16.53)^{2/3} \cdot l^{1/3} T^{2/3}  \psi^{2/3} + \frac{66.12}{16.53^{1/3}}\cdot l^{1/3} T^{2/3}  \psi^{2/3} \\
                         &\leq 38.94 \cdot l^{1/3} T^{2/3} D^{2/3} S^{2/3} \left(A\log{ \left( \frac{T}{\delta} \right)} \right)^{1/3}	
\end{align*}
\end{proof}

\section{Proof of Corollary \ref{Cor:SamComp}}
\label{Proof:Cor:SamComp}
\begin{proof}
The proof uses a key idea from \citet[Corollary 3]{Jaksch:2010:NRB:1756006.1859902}. Let $T_0$ be such that for any $T \geq T_0$, the average per-step regret of \swucrl \ using $W^*$ is at-most $\epsilon$. Therefore, according to Corollary \ref{Cor:OptWindow},
\begin{align}
\epsilon T &\geq 38.94 \cdot l^{1/3} T^{2/3} D^{2/3} S^{2/3} \left(A\log{ \left( \frac{T}{\delta} \right)} \right)^{1/3}  \nonumber \\
T &\geq \tempVar \log{(T/\delta)} \qquad \qquad \text{where } \tempVar \Defined \frac{(38.94)^3 l D^2 S^2 A}{\epsilon^3} \label{eq:Cor2eq1}
\end{align}
Assume that $T_0 = 2 \tempVar\log(\tempVar / \delta)$ for . Then,
\begin{align}
T_0 &= \tempVar\log((\tempVar / \delta)^2) \nonumber \\
    &> \tempVar \log(2 \tempVar \log{(\tempVar / \delta)} / \delta) \qquad \qquad \text{Using } x > 2\log{x} \label{} \nonumber \\
    &= \tempVar \log{(T_0/\delta)} \label{eq:Cor2eq2}
\end{align}
From Eq. \ref{eq:Cor2eq1} and Eq. \ref{eq:Cor2eq2}, it is clear that,
$$ T_0 \geq 2 \cdot (38.94)^3 \cdot\frac{ l D^2 S^2 A}{\epsilon^3} \log{\left(\frac{(38.94)^3 l D^2 S^2 A}{\epsilon^3\delta}\right)} $$
\end{proof}


\section{Simplification of $E_1$}
\label{sec:E1}
\begin{align*}
E_1 &= \sqrt{\frac{5}{8} T \log{\left( \frac{8T}{\delta} \right)}} + \sqrt{T} + D\sqrt{\frac{5}{2} T \log{\left( \frac{8T}{\delta} \right)}} \\
	&\leq \sqrt{ \frac{5}{4} T \log{ \left( \frac{T}{\delta} \right)} } + \sqrt{T} +  D\sqrt{5 T \log{\left( \frac{T}{\delta} \right)}} \qquad \text{ if } T \geq 8 \delta  \\
	&\leq D \sqrt{ \frac{5}{4} T \log{ \left( \frac{T}{\delta} \right)} } + D \sqrt{T\log{ \left( \frac{T}{\delta} \right)}} + D\sqrt{5 T \log{\left( \frac{T}{\delta} \right)}} \\
	&\leq 4.36 D \sqrt{T \log{\left(\frac{T}{\delta}\right)}} 
\end{align*}
The first inequality is true because if $T \geq 8\delta$, then $\log{\left( \frac{8T}{\delta} \right)} \leq 2 \log{\left( \frac{T}{\delta} \right)}$.

\section{Simplification of $E_2$}
\label{sec:E2}
\begin{align*}
E_2 &= \left\lceil \frac{T}{W} \right\rceil  DSA \log_2{ \left( \frac{8W}{SA} \right)} + \left( 2D \sqrt{14S \log{\left( \frac{2AT}{\delta} \right) }} + 2\right) (2\sqrt{2} + 2) \left \lceil \frac{T}{W} \right \rceil \sqrt{SAW} \\
	&\leq  \left\lceil \frac{T}{W} \right\rceil DS \sqrt{W \log{\left( \frac{T}{\delta} \right)}} + \left( 2D \sqrt{14S \log{\left( \frac{2AT}{\delta} \right) }} + 2\right) (2\sqrt{2} + 2) \left \lceil \frac{T}{W} \right \rceil \sqrt{SAW}  \\
	&\leq \left\lceil \frac{T}{W} \right\rceil DS \sqrt{W \log{\left( \frac{T}{\delta} \right)}} + \left( 2D \sqrt{28S \log{\left( \frac{T}{\delta} \right) }} + 2\right) (2\sqrt{2} + 2) \left \lceil \frac{T}{W} \right \rceil \sqrt{SAW} \\
&\leq \left(1 + (2\sqrt{28} + 2)\cdot(2\sqrt{2} + 2) \right) \left\lceil \frac{T}{\sqrt{W}} \right\rceil DS \sqrt{ A \log{\left( \frac{T}{\delta} \right)}}  \\
   &\leq 61.76 \left\lceil \frac{T}{\sqrt{W}} \right\rceil DS \sqrt{A \log{ \left( \frac{T}{\delta} \right)}}    
\end{align*}

The first inequality is true assuming $W  \geq \frac{A \left( \log_2{(8W/SA)} \right)^2 }{\log{(T/\delta)}}$ and the second inequality is true assuming $  T \geq 2A\delta$. 
}{} 

\end{document}